\def\BState{\State\hskip-\ALG@thistlm}
\acrodef{CCDF}{complementary cumulative distribution function}
\acrodef{CF}{characteristic function}
\acrodef{PPP}{Poisson point process}
\acrodef{RV}{random variable}
\acrodef{i.i.d.}{independent and identically distributed}
\acrodef{PDF}{probability distribution function}
\acrodef{CDF}{cumulative distribution function}
\acrodef{ch.f.}{characteristic function}
\acrodef{AWGN}{additive white Gaussian noise}
\acrodef{SNR}{signal-to-noise ratio}
\acrodef{LRT}{likelihood ratio test}
\acrodef{DRT}{distance ratio test}
\acrodef{GLRT}{generalized likelihood ratio test}
\acrodef{CRLB}{Cram\'{e}r-Rao lower bound}
\acrodef{CRB}{Cram\'{e}r-Rao bound}
\acrodef{ZZLB}{Ziv-Zakai lower bound}
\acrodef{ZZB}{Ziv-Zakai bound}
\acrodef{LOS}{line-of-sight}
\acrodef{ToF}{time-of-flight}
\acrodef{NLOS}{non-line-of-sight}
\acrodef{GDOP}{geometric dilution of precision}
\acrodef{GPS}{Global Positioning System}
\acrodef{FIM}{Fisher information matrix}
\acrodef{PEB}{position error bound}
\acrodef{SPEB}{squared position error bound}
\acrodef{TOA}{time-of-arrival}
\acrodef{TOF}{time-of-flight}
\acrodef{WSN}{wireless sensor network}
\acrodef{MAC}{medium access control}
\acrodef{RSS}{received signal strength}
\acrodef{WAF}{wall attenuation factor}
\acrodef{TDOA}{time difference-of-arrival}
\acrodef{RF}{radiofrequency}
\acrodef{RTT}{round-trip time}
\acrodef{AOA}{angle-of-arrival}
\acrodef{MF}{matched filter}
\acrodef{ED}{energy detector}
\acrodef{ML}{maximum likelihood}
\acrodef{MSE}{mean-square error}
\acrodef{RMSE}{root-mean-square error}
\acrodef{LEO}{localization error outage}
\acrodef{ppm}{part-per-million}
\acrodef{ACK}{acknowledge}
\acrodef{UWB}{Ultrawide bandwidth}
\acrodef{TNR}{threshold-to-noise ratio}
\acrodef{LS}{least squares}
\acrodef{IR-UWB}{impulse radio UWB}
\acrodef{FCC}{Federal Communications Commission}
\acrodef{TH}{time-hopping}
\acrodef{PPM}{pulse position modulation}
\acrodef{MUI}{multi-user interference}
\acrodef{PDP}{power delay profile}
\acrodef{BPZF}{band-pass zonal filter}
\acrodef{SIR}{signal-to-interference ratio}
\acrodef{SINR}{signal-to-interference-plus-noise ratio}
\acrodef{RFID}{radio frequency identification}
\acrodef{WPAN}{wireless personal area network}
\acrodef{WWB}{Weiss-Weinstein bound}
\acrodef{DP}{direct path}
\acrodef{MF}{matched filter}
\acrodef{MMSE}{minimum-mean-square-error}
\acrodef{SBS}{serial backward search}
\acrodef{SBSMC}{serial backward search for multiple clusters}
\acrodef{NBI}{narrowband interference}
\acrodef{WBI}{wideband interference}
\acrodef{INR}{interference-to-noise ratio}
\acrodef{CR}{channel response}
\acrodef{CIR}{channel impulse response}
\acrodef{CR}{channel  response}
\acrodef{RADAR}{radar}
\acrodef{MUR}{Multistatic radar}
\acrodef{JBSF}{jump back and search forward}
\acrodef{HDSA}{high-definition situation-aware}
\acrodef{RRC}{root raised cosine}
\acrodef{ST}{simple thresholding}
\acrodef{BTB}{Bellini-Tartara bound}
\acrodef{P-Max}{$P$-Max}  
\acrodef{MIMO}{multiple-input multiple-output}
\acrodef{MAP}{maximum a posteriori}
\acrodef{FG}{factor graph}
\acrodef{OP}{outage probability}
\acrodef{WED}{wall extra delay}
\acrodef{RMS}{root mean square}
\acrodef{SPAWN}{sum-product algorithm over a wireless network}
\acrodef{MDD}{minimum distance distribution}
\acrodef{MAP}{maximum a posteriori probability}
\acrodef{SAP}{small cell access point}
\acrodef{UE}{user equipment}
\acrodef{MBS}{macro cell base station}
\acrodef{UER}{\ac{UE} Relay}
\acrodef{D2D}{device-to-device}
\acrodef{MBS}{macro base station}
\acrodef{CSI}{channel state information}
\acrodef{OGR}{outage guard region}
\acrodef{FUR}{feasible UER region}
\acrodef{EHR}{energy harvesting region}
\acrodef{EH}{energy harvesting}
\acrodef{D2D-EHSN}{D2D communication provided \ac{EH} small cell network}
\acrodef{D2D-EHHN}{D2D communication provided \ac{EH} heterogeneous network}
\acrodef{3GPP}{3rd Generation Partnership Project}
\acrodef{BS}{base station}
\acrodef{DF}{decode and forward}
\acrodef{CCDF}{complementary cumulative distribution function}
\acrodef{ZF}{zero forcing}
\acrodef{RZF}{regularized zero forcing}
\acrodef{WLLN}{weak law of large number}
\acrodef{SLLN}{strong law of large numbers}
\acrodef{TDD}{Time-division duplex}
\acrodef{EE}{energy efficiency} 
\DeclareMathAlphabet{\mathsf}{OML}{cmbr}{m}{it}
\newtheorem{definition}{\bf Definition}
\newtheorem{theorem}{\bf Theorem}
\newtheorem{lemma}{\bf Lemma}
\newtheorem{assumption}{\bf Assumption}
\newcommand{\bd}{\begin{description}}
\newcommand{\ed}{\end{description}}
\newcommand{\be}{\begin{enumerate}}
\newcommand{\ee}{\end{enumerate}}
\newcommand{\bi}{\begin{itemize}}
\newcommand{\ei}{\end{itemize}}
\newcommand{\bl}{\begin{list}}
\newcommand{\el}{\end{list}}
\newcommand{\bt}{\begin{tabbing}}
\newcommand{\et}{\end{tabbing}}
\title{ Federated Stochastic Gradient Descent Begets Self-Induced Momentum }
\name{ Howard~H.~Yang$^\mathsection$, 
       Zuozhu~Liu$^\mathsection$, 
       Yaru~Fu$^\ast$, 
       Tony~Q.~S.~Quek$^\dagger$, 
       and H. Vincent Poor$\ddagger$ 
       \thanks{This work was supported in part by the Zhejiang Provincial Natural Science Foundation of China under Grant No. LGJ22F010001. (\textit{Corresponding Author: Zuozhu Liu.}) 
}\footnotemark[1]
}
\address{ $^\mathsection$ \textit{Zhejiang University/University of Illinois at Urbana-Champaign Institute, Haining 314400, China }\\
$^\ast$ \textit{Hong Kong Metropolitan University, Hong Kong, 999077} \\
$^\dagger$ \textit{Singapore University of Technology and Design, Singapore 487372}\\
$\ddagger$ \textit{Princeton University, Princeton, NJ 08544, USA} }
\begin{document}
%

\maketitle

\begin{abstract}
Federated learning (FL) is an emerging machine learning method that can be applied in mobile edge systems, in which a server and a host of clients collaboratively train a statistical model utilizing the data and computation resources of the clients without directly exposing their privacy-sensitive data. 
We show that running stochastic gradient descent (SGD) in such a setting can be viewed as adding a momentum-like term to the global aggregation process.
Based on this finding, we further analyze the convergence rate of a federated learning system by accounting for the effects of parameter staleness and communication resources.
These results advance the understanding of the Federated SGD algorithm, and also forges a link between staleness analysis and federated computing systems, which can be useful for systems designers.
\end{abstract}

\begin{keywords}
Federated learning, stochastic gradient descent (SGD), momentum, convergence rate.
\end{keywords}

\section{Introduction}
Federated learning (FL) is a branch of machine learning models that allow a computing unit, i.e., an edge server, to train a statistical model from data stored on a swarm of end-user entities, i.e., the clients, without directly accessing the clients' local datasets \cite{MaMMooRam:16}.
Specifically, instead of aggregating all the data to the server for training, FL brings the machine learning models directly to the clients for local computing, where only the resulting parameters are uploaded to the server for global aggregation, after which an improved model is sent back to the clients for another round of local training \cite{LimLuoHoa:20}. Such a training process usually converges after sufficient rounds of parameter exchanges and computing among the server and clients, upon which all the participants can benefit from a better machine learning model \cite{KonMcMBren:16,LiHuaYan:20,KhaMisRic:20}. As a result, the salient feature of on-device training mitigates many of the systemic privacy risks as well as communication overheads, hence making FL particularly relevant for next-generation mobile networks \cite{ZhaFenYan:20,ParSamBen:19,LetCheShi:19}.
Nonetheless, in the setting of FL, the server usually needs to link up a massive number of clients via a resource-limited medium, e.g., the spectrum, and hence only a limited number of the clients can be selected to participate in the federated training during each round of iteration \cite{YanLiuQue:20,YanAraQue:20ICASSP,WanTuoSal:19JSAC,CheYanSaa:19}. This, together with the fact that the time spent on transmitting the parameters can be orders of magnitude higher than that of local computations \cite{LanLeeZho:17,ArjShaSre:20}, makes the straggler issue a serious one in FL. To that end, a simple but effective approach has been proposed \cite{CheGiaSun:18}, i.e., reusing the outdated parameters in the global aggregation stage so as to accelerate the training efficiency. The gain of this scheme has been amply demonstrated via experiments while the intrinsic rationale behind it remains unclear. 
In this paper, we take the stochastic gradient descent (SGD)-based FL training as an example and show that reusing the outdated parameters implicitly introduces a momentum-like term in the global updating process, and prove the subsequent convergence rate of federated computing.
This result advances the understanding of FL and may be useful to guide further research in this area.

\vspace{-.15in}

\section{System Model}\label{sec:sysmod}
Let us consider an FL system consisting of one server and $K$ clients, as depicted per Fig.~1, where $K$ is usually a large number. Each client $k$ has a local dataset $\mathcal{D}_k = \{ \mathbf{x}_i \in \mathbb{R}^d, y_i \in \mathbb{R} \}_{ i = 1 }^{n_k}$ with size $\vert \mathcal{D}_k \vert = n_k$, and we assume the local datasets are statistically independent across the clients.
The goal of the server and clients is to jointly learn a statistical model over the datasets residing on all the clients without sacrificing their privacy. To be more concrete, the server aims to fit a vector $\mathbf{w} \in \mathbb{R}^d$ so as to minimize the following loss function without having explicit knowledge of $\mathcal{D} = \cup_{k=1}^K \mathcal{D}_k$:
\begin{align}\label{equ:Obj_Func}
\min_{ \mathbf{w} \in \mathbb{R}^d } f( \mathbf{w} ) &= \frac{1}{n} \sum_{ i=1 }^{n} \ell( \mathbf{w}; \mathbf{x}_i, y_i  ) 
\nonumber\\
&= \frac{n_k}{n} \cdot \frac{1}{n_k} \sum_{ j=1 }^{ n_k } \ell( \mathbf{w}; \mathbf{x}_j, y_j )
\nonumber\\
&= \sum_{ k=1 }^K p_k f_k( \mathbf{w} )
\end{align}
where $n = \sum_{ k=1 }^K n_k$, $p_k = n_k/n$, $\ell(\cdot)$ is the loss function assigned on each data point, and $f_k ( \mathbf{w} ) = \sum_{ j=1 }^{ n_k } \ell( \mathbf{w}; \mathbf{x}_j, y_j )/n_k$ is the local empirical loss function of client $k$.

\begin{figure}[t!]
  \centering{}

    {\includegraphics[width=0.95\columnwidth]{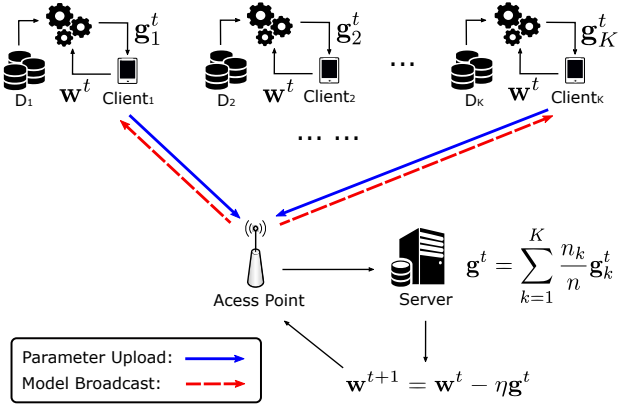}}

  \caption{ An illustration of Federated SGD training: (A) clients leverage their local datasets to evaluate the gradient term, (B) the server aggregates the received updates to produce a new global model, (C) the new model is sent back to the clients, and the process is repeated. }
  \label{fig:FL_WN}
\end{figure}

Because the server has no direct access to the individual datasets, the model training needs to be carried out by the clients in a federated fashion. In this paper, we adopt Federated SGD, a widely used mechanism, for this task. 
The details are summarized in Algorithm~\ref{Alg:Gen_FL} \cite{LimLuoHoa:20}.
Specifically, at iteration $t$, the server needs to send the global model $\mathbf{w}^t$ to a subset of clients $S_t$, where in general $N = \vert S_t \vert \ll K$ because the limited communication resources cannot support simultaneous transmissions from a vast number of clients \cite{YanLiuQue:20}, for on-device model training.
Upon receiving $\mathbf{w}^t$, the selected clients will leverage it to evaluate the gradient of the local empirical loss -- by means of an $H$-step estimation -- and upload the estimated gradients $\mathbf{g}^{t}_k$, $k \in S_t$. 
In essence, this comprises computing the stochastic gradient with a batch size of $H$ data points.
Finally, the server aggregates the collected parameters to produce a new output per \eqref{equ:wt_plain}.
Such an orchestration amongst the server and clients repeats for a sufficient number of communication rounds until the learning process converges.

It is worth noting that the gradient aggregation step \eqref{equ:gt_plain} in Algorithm~\ref{Alg:Gen_FL} utilizes not only the fresh updates collected from the selected clients but also the outdated gradients from the unselected ones. As will be shown later, this procedure, in essence, induces an implicit momentum into the learning process.

\begin{algorithm}[t!]
\caption{ Federated SGD Algorithm }
\begin{algorithmic}[1] 
\State \textbf{Parameters:} $H$ = number of local steps per communication round, $\eta$ = step size for stochastic gradient descent
\State \textbf{Initialize:} $\mathbf{w}^0 \in \mathbb{R}^d$
\For { $t = 0, 1, 2, ..., T-1$ }
\State The server randomly selects a set $S_t$ of $N$ clients and broadcasts the global parameter $\mathbf{w}^t$ to them
\For { each client $k \in S_t$ in parallel }
\State Initialize $\mathbf{g}_k^{t,0} = 0$
\For { $s$ = 0 to $H-1$ }
\State Sample $i \in \mathcal{D}_k$ uniformly at random, and update the local estimation of the gradient, $\mathbf{g}^{t,s}_k$, as follows:
\begin{align} \label{equ:LocGraDscnt}
\mathbf{g}_k^{t, s+1} = \mathbf{g}_k^{t,s} + \nabla \ell( \mathbf{w}^{t}; \mathbf{x}_i, y_i )
\end{align}
\EndFor
\State Set $\mathbf{g}_k^{t} = \mathbf{g}_k^{t, H}/H$ and send the parameter back to the server
\EndFor
\State The server collects all the updates of $\{ \mathbf{g}^{t}_i \}_{ i \in S_t }$ and assigns $\mathbf{g}^{t}_j = \mathbf{g}^{t-1}_j$ for all $j \neq S_t$. Then, the server updates both the estimation of gradient $\mathbf{g}^t$ and parameter $\mathbf{w}^{t+1}$ as follows:
\begin{align} \label{equ:gt_plain}
\mathbf{g}^{t} = \sum_{k=1}^K \frac{ n_k }{ n } \mathbf{g}^{t}_k, \\ \label{equ:wt_plain}
\mathbf{w}^{t+1} = \mathbf{w}^t - \eta \mathbf{g}^t
\end{align}
\EndFor
\State \textbf{Output:} $\mathbf{w}^T$
\end{algorithmic} \label{Alg:Gen_FL}
\end{algorithm} 

\vspace{-.15in}
\section{ Analysis }\label{sec:Analysis}
This section comprises the main technical part of this paper, in which we analytically characterize the updating process of global parameters and derive the convergence rate of the Federated SGD algorithm.

\subsection{Update Process of Global Parameters }
Due to limited communication resources, the server can only select a subset of the clients to conduct local computing and update their gradients in every round of global iteration. As a result, the gradients of the unselected clients become stale.
In accordance with \eqref{equ:gt_plain} and \eqref{equ:wt_plain}, after the $t$-th communication round, the update of global parameters at the server side can be rewritten as follows:
\begin{align} \label{equ:UpdateEqu}
\mathbf{w}^{t+1} = \mathbf{w}^t - \eta \, \sum_{ k = 1}^K p_k \, \mathbf{g}^{t-\tau_k}_k
\end{align}
in which $\tau_k$ is the staleness of the parameters corresponding to the $k$-th client.
Because the clients to participate in the FL are selected uniformly at random in each communication round, the staleness of parameters, $\{\tau_k\}_{k=1}^K$, can be abstracted as independently and identically distributed (i.i.d.) random variables with each following a geometric distribution:
\begin{align}
\mathbb{P}(\tau_k = l) = \beta^{l} ( 1 - \beta ), \quad l = 0, 1, 2, ...
\end{align}
where $\beta =  1 - N/K$.

These considerations bring us to our first result.
\begin{lemma} \label{lma:UpdateProcess}
\textit{ Under the depicted FL framework, the parameter updating process constitutes the following relationship:
  \begin{align}
  \mathbb{E}\big[ \mathbf{w}^{ t + 1 } \!-\! \mathbf{w}^t \big] \!=\! \beta\, \mathbb{E} \big[ \mathbf{w}^{ t } \!-\! \mathbf{w}^{ t - 1 } \big] \!-\! ( 1 \!-\! \beta ) \eta \, \mathbb{E}\big[ \mathbf{g}^t \big].
  \end{align}
}
\end{lemma}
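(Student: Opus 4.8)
The plan is to track how the per-client staleness $\tau_k$ evolves between consecutive communication rounds and to exploit the memorylessness of its geometric law. First I would read off from \eqref{equ:UpdateEqu} the compact identity $\mathbf{w}^{t+1}-\mathbf{w}^t = -\eta\,\mathbf{G}^t$, where $\mathbf{G}^t := \sum_{k=1}^K p_k\,\mathbf{g}^{t-\tau_k}_k$ is the aggregated (possibly stale) gradient actually applied at round $t$; the same relation one index earlier reads $\mathbf{w}^t-\mathbf{w}^{t-1} = -\eta\,\mathbf{G}^{t-1}$. Dividing the claimed identity by $-\eta$ shows that it is equivalent to the single-index recursion
\begin{align}
\mathbb{E}[\mathbf{G}^t] = \beta\,\mathbb{E}[\mathbf{G}^{t-1}] + (1-\beta)\,\mathbb{E}[\mathbf{g}^t],
\end{align}
where $\mathbf{g}^t = \sum_{k=1}^K p_k\,\mathbf{g}^t_k$ is the \emph{fresh} aggregate with every client evaluated at $\mathbf{w}^t$. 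The whole proof thus reduces to establishing this recursion.

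The key is the Markov structure of the staleness induced by uniform selection. Since each client enters $S_t$ with probability $N/K = 1-\beta$ independently of the past, the stored gradient of client $k$ at round $t$ is refreshed to $\mathbf{g}^t_k$ with probability $1-\beta$ and is otherwise left untouched, in which case it equals the gradient client $k$ already held at round $t-1$. Writing $\mathbbm{1}(\cdot)$ for the selection indicator, this is the per-client decomposition
\begin{align}
\mathbf{g}^{t-\tau_k}_k = \mathbbm{1}(k \in S_t)\,\mathbf{g}^t_k + \mathbbm{1}(k \notin S_t)\,\mathbf{g}^{(t-1)-\tau_k}_k,
\end{align}
where on the right the second staleness is that of round $t-1$. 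Because the round-$t$ selection is independent of everything generated through round $t-1$, taking the conditional expectation over $S_t$ first gives $\mathbb{E}[\mathbf{g}^{t-\tau_k}_k] = (1-\beta)\,\mathbb{E}[\mathbf{g}^t_k] + \beta\,\mathbb{E}[\mathbf{g}^{(t-1)-\tau_k}_k]$ for every $k$.

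Summing this per-client identity against the weights $p_k$ and recognizing $\sum_{k=1}^K p_k\,\mathbf{g}^{(t-1)-\tau_k}_k$ as $\mathbf{G}^{t-1}$ yields the single-index recursion above; multiplying back by $-\eta$ and substituting $\mathbf{G}^t = -(\mathbf{w}^{t+1}-\mathbf{w}^t)/\eta$ and $\mathbf{G}^{t-1} = -(\mathbf{w}^t-\mathbf{w}^{t-1})/\eta$ then reproduces the stated relation. As an alternative to the selection argument, one may insert the geometric law $\mathbb{P}(\tau_k = l) = \beta^l(1-\beta)$ directly to obtain $\mathbb{E}[\mathbf{G}^t] = (1-\beta)\sum_{l \geq 0}\beta^l\,\mathbb{E}[\mathbf{g}^{t-l}]$, split off the $l=0$ term, and apply the shift $l \mapsto l+1$; the memoryless structure telescopes the tail into $\beta\,\mathbb{E}[\mathbf{G}^{t-1}]$ and gives the same recursion.

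The step I expect to be the main obstacle is justifying the per-client decomposition rigorously: arguing that ``not selected at round $t$'' leaves the stored gradient identically equal to its round-$(t-1)$ value (equivalently, that the time index $t-\tau_k$ of the retained gradient is preserved because the staleness simply increments by one), and that the indicator $\mathbbm{1}(k \in S_t)$ is independent of the gradient history so the conditional expectation factorizes as above. A secondary subtlety is keeping the two sources of randomness---the client selection and the within-client data sampling in \eqref{equ:LocGraDscnt}---consistently under a single $\mathbb{E}[\cdot]$; conditioning on the fresh gradients and averaging over $S_t$ by the tower property is the clean way to separate them.
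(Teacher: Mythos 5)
Your proof is correct, and it takes a genuinely different route from the paper's. The paper proceeds by subtracting consecutive updates of \eqref{equ:UpdateEqu} to obtain $\mathbf{w}^{t+1}-\mathbf{w}^t = \mathbf{w}^t-\mathbf{w}^{t-1} - \eta\sum_{k}p_k\big(\mathbf{g}_k^{t-\tau_k}-\mathbf{g}_k^{t-\tau_k-1}\big)$, then expands $\mathbb{E}\big[\mathbf{g}_k^{t-\tau_k}\big]$ and $\mathbb{E}\big[\mathbf{g}_k^{t-\tau_k-1}\big]$ as infinite series against the geometric pmf, and finally folds the tail $\sum_k p_k\,\mathbb{E}\big[\mathbf{g}_k^{t-\tau_k-1}\big]$ back into $-\mathbb{E}\big[\mathbf{w}^t-\mathbf{w}^{t-1}\big]/\eta$ by reading \eqref{equ:UpdateEqu} one round earlier (its step $(a)$). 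You instead divide the claim by $-\eta$ to isolate the one-step recursion $\mathbb{E}[\mathbf{G}^t]=\beta\,\mathbb{E}[\mathbf{G}^{t-1}]+(1-\beta)\,\mathbb{E}[\mathbf{g}^t]$ and prove it by conditioning on the round-$t$ selection, using the path-wise identity that a client's stored gradient is refreshed with probability $1-\beta$ and carried over unchanged otherwise. This is arguably the cleaner argument: the indicator decomposition is an exact statement of the algorithm's dynamics, so the only probabilistic input is the independence of $S_t$ from the history, whereas the paper's subtraction step silently identifies the round-$(t-1)$ staleness with the same variable $\tau_k$ used at round $t$ --- a statement valid only in distribution, which is precisely the subtlety you flagged as the ``main obstacle'' and resolved with the indicator. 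Your route also avoids the series bookkeeping, in which the paper's intermediate line in fact contains an index slip (it writes $\sum_{l=1}^\infty(1-\beta)\beta^l\,\mathbb{E}\big[\mathbf{g}_k^{t-l-1}\big]$ where the shift $l\mapsto l+1$ should give $\sum_{l=0}^\infty(1-\beta)\beta^{l+1}\mathbb{E}\big[\mathbf{g}_k^{t-l-1}\big]$, although the subsequent line is again correct). What the paper's computation buys in exchange is that it operates purely on the marginal geometric law as declared in the system model, without re-invoking the selection mechanism; your fallback route --- insert the pmf, split off $l=0$, re-index the tail --- is essentially that same computation in compact form, with your recognition of $\beta\,\mathbb{E}[\mathbf{G}^{t-1}]$ playing the role of the paper's step $(a)$.
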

\begin{proof}
Using \eqref{equ:UpdateEqu}, we can subtract $\mathbf{w}^t$ from $\mathbf{w}^{t+1}$ and obtain the following: 
\begin{align} \label{equ:SubtractForm_Explct}
\mathbf{w}^{t+1} \!-\! \mathbf{w}^{t} &=  \mathbf{w}^{t} \!-\! \mathbf{w}^{ t - 1 }
 -\! \eta \! \sum_{k=1}^K p_k \big( \mathbf{g}^{t-\tau_k}_k - \mathbf{g}^{t-\tau_k-1}_k \big).
\end{align}
By taking an expectation with respect to the staleness $\tau_k$, $k \in \{1, \cdots, K\}$ on both sides of the above equation, the following holds:
\begin{align} \label{equ:SubtractForm}
\mathbb{E}[ \mathbf{w}^{t+1} - \mathbf{w}^{t} ] &= \mathbb{E}[ \mathbf{w}^{t} - \mathbf{w}^{ t - 1 } ]
\nonumber\\
&\quad - \eta \sum_{k=1}^K p_k \underbrace{ \mathbb{E} \big[ \, \mathbf{g}^{t-\tau_k}_k - \mathbf{g}^{t-\tau_k-1}_k \, \big] }_{Q_1}.
\end{align}
Since $\tau_k \sim Geo( 1 - \beta )$, we can calculate $Q_1$ as
\begin{align}
Q_1
&=\! \big( 1 - \beta \big) \mathbb{E}\big[ \mathbf{g}^t_k \big] + \sum_{ l = 1 }^\infty  (1 - \beta)\beta^{l} \mathbb{E}\big[ \mathbf{g}^{t-l-1}_k \big]
\nonumber\\
&~~\qquad \qquad \qquad \qquad \qquad - \sum_{ l = 0}^\infty (1-\beta) \beta^{l} \mathbb{E}\big[ \mathbf{g}^{t-l-1}_k \big]
\nonumber\\
&=\! \big( 1 - \beta \big) \mathbb{E}\big[ \mathbf{g}^{t}_k \big]  - \sum_{ l = 0}^\infty (1 \!-\! \beta)^2 \beta^{l} \mathbb{E}\big[ \mathbf{g}^{t-l-1}_k \big].
\end{align}
Furthermore, by noticing that for the stochastic gradient of each client $k$, the following result holds:
\begin{align}
 \mathbb{E} \big[ \mathbf{g}_k^{ t - \tau_k - 1} \big] = \sum_{ l = 0}^\infty\,  ( 1 - \beta) \beta^{l} \mathbb{E}\big[ \mathbf{g}_k^{ t - l - 1} \big],
\end{align}
we have
\begin{align} \label{equ:ExpanVersion}
& \eta \sum_{k=1}^K p_k \, \mathbb{E} \big[\, \mathbf{g}_k^{ t - \tau_k } - \mathbf{g}_k^{ t - \tau_k - 1} \, \big]
\nonumber\\
&=\! \big( 1 \!-\! \beta \big) \eta \sum_{k=1}^K p_k \mathbb{E}\big[ \mathbf{g}_k^t \big]
\nonumber\\
&\qquad \qquad \qquad \!-\! \big( 1 \!-\! \beta \big) \eta\! \sum_{k=1}^K p_k \! \sum_{l=0}^\infty  (1 \!-\! \beta) \beta^{l} \mathbb{E} \big[ \mathbf{g}_k^{ t - l -1 } \big]
\nonumber\\
&= \! \big( 1 \!-\! \beta \big) \eta \, \mathbb{E}\big[ \mathbf{g}^t \big]
 \!-\! \big( 1 \!-\! \beta \big) \eta\! \sum_{k=1}^K p_k  \mathbb{E} \big[ \mathbf{g}_k^{ t - \tau_k -1 } \big]
\nonumber\\
&\stackrel{(a)}{=}  \! \big( 1 \!-\! \beta \big) \eta \, \mathbb{E}\big[ \mathbf{g}^t \big] + \big( 1 \!-\! \beta \big) \mathbb{E} \big[ \mathbf{w}^{ t } \!-\! \mathbf{w}^{ t - 1 } \big],
\end{align}
where ($a$) follows from \eqref{equ:UpdateEqu}.
Finally, by substituting \eqref{equ:ExpanVersion} into \eqref{equ:SubtractForm}, we complete the proof.
\end{proof}

From Lemma~\ref{lma:UpdateProcess}, we can identify a momentum-like term, namely $\beta \mathbb{E}[ \mathbf{w}^t - \mathbf{w}^{t-1} ]$, when the global parameter is updated from $\mathbf{w}^t$ to $\mathbf{w}^{t+1}$.
This can mainly be attributed to the 
reuse of gradients, which introduces memory during the global aggregation step and makes the parameter vector $\mathbf{w}^{t+1}$ stay close to the current server model $\mathbf{w}^{t}$.
Notably, such a phenomenon is also observable in the context of completely asynchronized SGD algorithms owing to similar reasons \cite{MitZhaHad:16}.
As a result, Lemma~\ref{lma:UpdateProcess} can serve as a useful reference to adjust the controling factor if one intends to accelerate the Federated SGD algorithm by running it in conjunction with an \textit{explicit momentum} term \cite{MitZhaHad:16,LiuCheChe:20,HuoYanGu:20}.
Besides, if the delayed gradient averaging such as \cite{ZhuLinLu:21} is employed, the design of \textit{gradient correction} shall take into account the effect of such an implicit momentum as well. 

In the sequel, we quantify the effect of this implicit momentum on the convergence performance of the FL system.

\vspace{-.15in}

\subsection{ Convergence Analysis }
To facilitate the analysis of the FL convergence rate, we make the following assumption on the structure of the global empirical loss function.
\begin{assumption}
\textit{The gradient of each $f_k$ is Lipschitz continuous with a constant $L>0$, i.e., for any $\mathbf{w}, \mathbf{v} \in \mathbb{R}^d$ the following is satisfied:
  \begin{align}
  \Vert \nabla f_k( \mathbf{w} ) - \nabla f_k( \mathbf{v} ) \Vert \leq L \Vert \mathbf{w} - \mathbf{v} \Vert.
  \end{align}
}
\end{assumption}

This assumption is standard in the machine learning literature and is satisfied by a wide range of machine learning models, such as SVM, logistic regression, and neural networks. Besides, no assumption regarding the convexity of the objective function is made.
We further leverage a notion, termed gradient coherence, to track the variant of the gradient during the training process, defined as follows \cite{DaiZhoDon:19}.
\begin{definition}
\textit{ The gradient coherence at communication round $t$ is defined as
\begin{align}
\mu_t = \min_{ 0 \leq s \leq t } \frac{ \langle\, \nabla f( \mathbf{w}^s ) , \nabla f(\mathbf{w}^t) \,\rangle }{ \Vert \nabla f( \mathbf{w}^s ) \Vert^2 }.
\end{align}
}
\end{definition}

The gradient coherence characterizes the largest deviation of directions between the current gradient and the gradients along the past iterations. As such, if $\mu_t$ is positive, then the direction of the current gradient is well aligned to those of the previous ones, and hence reusing the trained parameters can push forward the global parameter vector toward the optimal point.

\begin{theorem} \label{thm:ConvRate}
\textit{Suppose the gradient coherence $\mu_t$ is lower bounded by some $\mu > 0$ for all $t$ and the variance of the stochastic gradient is upper bounded by $\sigma^2 > 0$. If we choose the step size to be $\eta = 1/\sqrt{LT}$, then after $T$ rounds of communication, the Alg.~1 converges as follows:
\begin{align}
\min_{ 0 \leq t \leq T-1 } \!\! \mathbb{E}\big[ \Vert \nabla f( \mathbf{w}^t ) \Vert^2 \big] \leq \frac{ 2 \sqrt{L} \big[  f(\mathbf{w}^0) -   f( \mathbf{w}^* ) + \sigma^2 \big] }{ \big[ 1 - ( 1 - \mu ) \beta \big] \sqrt{T} }
\end{align}
in which $\mathbf{w}^* = \arg\min_{\mathbf{w} \in \mathbb{R}^d } f(\mathbf{w})$.
 }
\end{theorem}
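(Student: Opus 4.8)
The plan is to run the standard nonconvex analysis built on the $L$-smoothness (descent) inequality, but to feed the momentum/staleness structure of Lemma~\ref{lma:UpdateProcess} into the first-order term so that the gradient-coherence constant $\mu$ surfaces as the advertised factor $1-(1-\mu)\beta$. Concretely, I would first invoke smoothness of each $f_k$ (hence of $f$) to write, for every round $t$,
\begin{align}
f(\mathbf{w}^{t+1}) \leq f(\mathbf{w}^{t}) + \big\langle \nabla f(\mathbf{w}^{t}), \mathbf{w}^{t+1}-\mathbf{w}^{t} \big\rangle + \frac{L}{2}\big\Vert \mathbf{w}^{t+1}-\mathbf{w}^{t} \big\Vert^{2},
\end{align}
and then take expectations. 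The proof splits into lower-bounding the inner-product term and upper-bounding the quadratic term.

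For the first-order term I would substitute the staleness form \eqref{equ:UpdateEqu} of the update and take the expectation over the geometric staleness. Conditioning on the iterate history and using that each fresh stochastic gradient is unbiased, $\mathbb{E}[\mathbf{g}_k^{s}\mid\mathbf{w}^{s}]=\nabla f_k(\mathbf{w}^{s})$, together with $\sum_k p_k\nabla f_k=\nabla f$, collapses the client sum into the global gradient and produces geometric weights $(1-\beta)\beta^{l}$ on the successively staler gradients. The key manipulation is to peel off the zero-staleness ($l=0$) term, which contributes exactly $(1-\beta)\Vert\nabla f(\mathbf{w}^{t})\Vert^{2}$, and to handle the $l\geq 1$ terms with the coherence bound $\langle\nabla f(\mathbf{w}^{t}),\nabla f(\mathbf{w}^{t-l})\rangle\geq \mu\Vert\nabla f(\mathbf{w}^{t-l})\Vert^{2}$, which is legitimate since $\mu_t\geq\mu>0$. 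This is the step that turns inner products between misaligned gradients into genuine descent.

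Next I would sum the expected descent inequality over $t=0,\dots,T-1$ so the left-hand side telescopes to $\mathbb{E}[f(\mathbf{w}^{T})]-f(\mathbf{w}^{0})\geq f(\mathbf{w}^{*})-f(\mathbf{w}^{0})$. On the right, the coherence-bounded cross terms carry the past gradient norms $\Vert\nabla f(\mathbf{w}^{t-l})\Vert^{2}$, so I would swap the order of summation (reindex $s=t-l$) and evaluate the geometric series $\sum_{l\geq 1}\beta^{l}=\beta/(1-\beta)$; combining the diagonal coefficient $(1-\beta)$ with the reindexed off-diagonal coefficient $\mu\beta$ yields precisely $(1-\beta)+\mu\beta=1-(1-\mu)\beta$ multiplying $\eta\sum_{t}\mathbb{E}\Vert\nabla f(\mathbf{w}^{t})\Vert^{2}$. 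For the quadratic term I would bound $\Vert\mathbf{w}^{t+1}-\mathbf{w}^{t}\Vert^{2}=\eta^{2}\Vert\sum_k p_k\mathbf{g}_k^{t-\tau_k}\Vert^{2}$ by $\eta^{2}\sigma^{2}$ using convexity of the squared norm and the variance bound, so its total contribution is at most $\tfrac{L}{2}\eta^{2}T\sigma^{2}$. Inserting $\eta=1/\sqrt{LT}$, using $\sum_t\mathbb{E}\Vert\nabla f(\mathbf{w}^{t})\Vert^{2}\geq T\min_{t}\mathbb{E}\Vert\nabla f(\mathbf{w}^{t})\Vert^{2}$, and rearranging then produces the stated $O(1/\sqrt{T})$ bound.

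I expect the main obstacle to be the bookkeeping in the double-sum reindexing: the finite-horizon geometric sums carry boundary corrections of the form $\beta^{T-1-s}$, and one must argue these are dominated so that the clean constant $1-(1-\mu)\beta$ (rather than something strictly smaller) can be pulled out uniformly over all indices. A secondary but genuine subtlety is justifying the replacement of $\mathbb{E}\langle\nabla f(\mathbf{w}^{t}),\mathbf{w}^{t+1}-\mathbf{w}^{t}\rangle$ by an inner product against the expected, momentum-unrolled increment of Lemma~\ref{lma:UpdateProcess}; this requires conditioning on the history up to round $t$, applying the recursion conditionally, and invoking unbiasedness of the fresh gradients. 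Once these two points are pinned down, the remaining calculations are routine.
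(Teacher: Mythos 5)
Your skeleton (smoothness descent inequality, coherence on the stale cross terms, telescoping with $\eta=1/\sqrt{LT}$) matches the paper's, but the execution of the first-order term diverges from the paper's in a way that leaves a genuine gap. The obstacle you flag at the end is not bookkeeping; it defeats your route to the stated constant. Applying coherence in the definition-faithful form $\langle \nabla f(\mathbf{w}^t),\nabla f(\mathbf{w}^{t-l})\rangle \geq \mu \Vert \nabla f(\mathbf{w}^{t-l})\Vert^2$ puts the \emph{past} gradient norm on the right, which is what forces your double-sum reindexing; after the swap, the index $s$ receives the coefficient $(1-\beta)+\mu\beta\big(1-\beta^{T-1-s}\big)$, which at $s=T-1$ equals exactly $1-\beta$ no matter how large $T$ is, since the final iterate gets no off-diagonal credit at all. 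So the uniform constant your argument yields is $1-\beta$, strictly smaller than $1-(1-\mu)\beta$, and the deficit terms $\mu\beta^{T-s}\,\mathbb{E}\big[\Vert\nabla f(\mathbf{w}^s)\Vert^2\big]$ cannot be ``dominated'' without an a priori bound on gradient norms, which you do not have. The paper never meets this problem because it does not unroll the geometric series inside $Q_1$: it uses Lemma~\ref{lma:UpdateProcess} to write $Q_1 = \beta\,\mathbb{E}[\langle \mathbf{w}^t-\mathbf{w}^{t-1},\nabla f(\mathbf{w}^t)\rangle] - \eta(1-\beta)\mathbb{E}[\Vert\nabla f(\mathbf{w}^t)\Vert^2]$, rewrites $\mathbf{w}^t-\mathbf{w}^{t-1}$ via \eqref{equ:UpdateEqu} as a single stale gradient with one unified staleness variable $\tau$, and applies coherence once per round in the form $\mathbb{E}[\langle \nabla f(\mathbf{w}^{t-\tau-1}),\nabla f(\mathbf{w}^t)\rangle] \geq \mu\,\mathbb{E}[\Vert\nabla f(\mathbf{w}^t)\Vert^2]$ (normalized by the \emph{current} gradient), so every round already carries the clean coefficient $\eta[1-(1-\mu)\beta]$ and the telescoping involves no cross-term reindexing and no boundary corrections; see \eqref{equ:TermQ1}.

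The second genuine error is your quadratic term: the claim $\mathbb{E}\big[\Vert \eta\sum_k p_k \mathbf{g}_k^{t-\tau_k}\Vert^2\big] \leq \eta^2\sigma^2$ is false, because the aggregated stale gradient is not zero-mean and a variance bound does not bound a second moment. The paper's treatment in \eqref{equ:TermQ2} adds and subtracts $\nabla f(\mathbf{w}^t)$ and uses $\Vert \mathbf{a}+\mathbf{b}\Vert^2 \leq 2\Vert\mathbf{a}\Vert^2 + 2\Vert\mathbf{b}\Vert^2$ to get $Q_2 \leq 2\eta^2\big(\mathbb{E}[\Vert\nabla f(\mathbf{w}^t)\Vert^2] + \sigma^2\big)$; the extra gradient-norm term is then absorbed into the descent coefficient, which becomes $[1-(1-\mu)\beta]\eta - L\eta^2$, and this absorption (together with the final large-$T$ step) is precisely where the factor $2\sqrt{L}$ and the additive $\sigma^2$ in the theorem come from. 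Your shortcut makes both numerator and denominator look ``too good,'' and the error hides itself by producing a nominally stronger bound.
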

\begin{proof}
Following Assumption~1, we know that the empirical loss function $f(\cdot)$ is $L$-smooth, and hence after the $t$-th round of global parameter update the following holds:
\begin{align} \label{equ:Ef_wt}
& \mathbb{E}\big[ f( \mathbf{w}^{t+1} ) \big] \leq \mathbb{E}\big[ f( \mathbf{w}^t ) \big] + 
\underbrace{ \mathbb{E}\big[ \langle\, \mathbf{w}^{t+1} - \mathbf{w}^t, \nabla f(\mathbf{w}^t) \,\rangle \big] }_{Q_1}
\nonumber\\
&\qquad \qquad \qquad  + \frac{ L }{ 2 } \underbrace{ \mathbb{E}\big[ \Vert \mathbf{w}^{t+1} - \mathbf{w}^t \Vert^2 \big] }_{Q_2}.
\end{align}
Using Lemma~\ref{lma:UpdateProcess}, we can expand the terms in $Q_1$ and obtain the following:
\begin{align} \label{equ:TermQ1}
& Q_1\stackrel{(8)}{=} \mathbb{E}\big[ \langle\, \beta (\mathbf{w}^{t} - \mathbf{w}^{t-1}) - \eta (1-\beta) \mathbf{g}^t, \nabla f(\mathbf{w}^t) \,\rangle \big]
\nonumber\\
&\stackrel{(a)}{=} \beta \mathbb{E}\big[ \langle\, \mathbf{w}^{t} - \mathbf{w}^{t-1}, \nabla f(\mathbf{w}^t) \,\rangle \big] - \eta (1-\beta) \mathbb{E}\big[ \Vert \nabla f( \mathbf{w}^t ) \Vert^2 \big]
\nonumber\\
&\stackrel{(b)}{=} \!- \eta \beta \mathbb{E}\big[ \langle\, \nabla f( \mathbf{w}^{t-\tau-1} ) , \nabla f(\mathbf{w}^t) \,\rangle \big] \!-\! \eta (1 \!-\! \beta) \mathbb{E} \big[ \Vert \nabla f( \mathbf{w}^t ) \Vert^2 \big]
\nonumber\\
&\leq  - \eta \beta \mu  \mathbb{E} \big[ \Vert \nabla f( \mathbf{w}^t ) \Vert^2 \big] - \eta (1-\beta) \mathbb{E} \big[ \Vert \nabla f( \mathbf{w}^t ) \Vert^2 \big]
\nonumber\\
&= - \eta \big[ 1 - (1-\mu) \beta \big] \mathbb{E} \big[ \Vert \nabla f( \mathbf{w}^t ) \Vert^2 \big]
\end{align}
where ($a$) follows by noticing that $\mathbb{E}[\mathbf{g}^t] = \nabla f(\mathbf{w}^t)$ and in ($b$) we notice that all the random variables $\{ \tau_k \}_{k=1}^K$ possess the same distribution and hence unify them by introducing a random variable $\tau$ that satisfies $\tau = \tau_k$ in distribution. On the other hand, as the stochastic gradient has a bounded variance, $Q_2$ can be evaluated as
\begin{align} \label{equ:TermQ2}
Q_2 &= \mathbb{E}\Big[ \big\Vert \eta \sum_{k=1}^K p_k \mathbf{g}_k^{t-\tau_k} \big \Vert^2 \Big]
\nonumber\\
&= \eta^2 \mathbb{E}\Big[ \big\Vert \sum_{k=1}^K p_k \mathbf{g}_k^{t-\tau_k} - \nabla f( \mathbf{w}^t ) + \nabla f( \mathbf{w}^t ) \big \Vert^2 \Big]
\nonumber\\
&\leq 2 \eta^2 \big(  \mathbb{E}\big[ \Vert \nabla f(\mathbf{w}^{t}) \Vert^2 \big] + \sigma^2 \big).
\end{align}

By taking \eqref{equ:TermQ1} and \eqref{equ:TermQ2} back into \eqref{equ:Ef_wt} and telescoping $t$ from 0 to $T-1$, we have
\begin{align}
&\mathbb{E}[f( \mathbf{w}^{T-1} )] - \mathbb{E} [ f( \mathbf{w}^0 ) ] \leq L \sum_{ t = 0 }^{T-1} \eta^2 \mathbb{E}\big[ \Vert \nabla f( \mathbf{w}^t ) \Vert^2 \big]
\nonumber\\
&  + \sum_{ t = 0}^{T-1} { L \eta^2 \sigma^2 } \!-\!  \big( 1 \!-\! (1 \!-\! \mu) \beta \big) \sum_{ t = 0 }^{T-1} \! \eta \, \mathbb{E}\big[ \Vert \nabla f(\mathbf{w}^t) \Vert^2 \big].
\end{align}

Further rearranging the terms of the above inequality yields
\begin{align}
&\sum_{t=0}^{T-1} \big[ \big( 1 - ( 1-\mu ) \beta \big) \eta - {L} \eta^2 \big] \mathbb{E}\big[ \Vert f( \mathbf{w}^t ) \Vert^2 \big]
\nonumber\\
&\leq  f( \mathbf{w}_0 ) - \mathbb{E}\big[ f( \mathbf{w}^{T-1} ) \big] + { L \sigma^2 } \sum_{ t = 1 }^T \eta^2.
\end{align}
Note that $\mathbb{E}[ f(\mathbf{w}^T) ] \geq f(\mathbf{w}^*)$ and $\eta = 1/\sqrt{LT}$, and so we have
\begin{align}
& \min_{ 0 \leq t \leq T - 1 } \!\!\! \mathbb{E}\big[ \Vert \nabla f( \mathbf{w}^t )  \Vert^2 \big] \leq \frac{ f( \mathbf{w}_0 ) - f(\mathbf{w}^*) + { L \sigma^2 } \sum_{ t = 1 }^{T-1} \eta^2 }{ \sum_{ t = 0 }^{T-1} \big[ \big( 1 - ( 1 - \mu ) \beta \big) \eta - {L} \eta^2 \big] }
\nonumber\\
& = \frac{ f( \mathbf{w}_0 ) - f(\mathbf{w}^*) + \sigma^2 }{ \big( 1 - ( 1 - \mu ) \beta \big) \sqrt{T/L} - 1 }.
\end{align}
Finally, when $T$ is taken to be sufficiently large, we have 
\begin{align}
\big( 1 - ( 1 - \mu ) \beta \big) \sqrt{T/L} - 1 \geq \frac{ \big( 1 - ( 1 - \mu ) \beta \big) \sqrt{T} }{ 2 \sqrt{L} }
\end{align}
and the result follows.
\end{proof}

Following Theorem~\ref{thm:ConvRate}, several observations can be made: ($i$) For non-convex objective functions, Federated SGD converges to stationary points on the order of $1/\sqrt{T}$; ($ii$) the staleness of parameters impacts the convergence rate via the multiplicative constant, which unveils that when the communication resources are abundant, i.e., the server can select many clients for parameter updates in each iteration, that leads to an increase in $N$ which in turns reduces $\beta$ and results in a faster convergence rate, and vice versa; and
($iii$) this result also provides further evidence to the claim that having more clients participate in each round of FL training is instrumental in speeding up the model convergence \cite{YanJiaShi:20,NisYon:19}.{\footnote{A few simulation examples that corroborate these observations are available in: https://person.zju.edu.cn/person/attachments/2022-01/01-1641711371-850767.pdf }}

\vspace{-.15in}
\section{conclusion}

In this paper, we have carried out an analytical study toward a deeper understanding of the FL system.
For the Federated SGD algorithm that uses both fresh and outdated gradients in the aggregation stage, we have shown that this implicitly introduces a momentum-like term during the update of global parameters. We have also analyzed the convergence rate of such an algorithm by taking into account the parameter staleness and communication resources.
Our results have confirmed that increasing the number of selected clients in each communication round can accelerate the convergence of the FL algorithm through a reduction in the staleness of parameters.
The analysis does not assume convexity of the objective function and hence is applicable to even the setting of deep learning systems.
The developed framework reveals a link between staleness analysis and FL convergence rate, and may be useful for further research in this area.
%

\bibliographystyle{IEEEbib}
\bibliography{bib/StringDefinitions,bib/IEEEabrv,bib/howard_FedSGD_Momentum}
\end{document}